\newcommand\hl{\bgroup\markoverwith
  {\textcolor{red!80}{\rule[-.5ex]{2pt}{0.1ex}}}\ULon}
\newtheorem{lemma}{Lemma}
\journal{Journal of \LaTeX\ Templates}
\begin{document}

\begin{frontmatter}

\title{Local Sparse Approximation for Image Restoration with Adaptive Block Size Selection}

%% Group authors per affiliation:
%\author{Elsevier\fnref{myfootnote}}
%\address{Radarweg 29, Amsterdam}
%\fntext[myfootnote]{Since 1880.}

%% or include affiliations in footnotes:
\author[mymainaddress]{Sujit~Kumar~Sahoo\corref{mycorrespondingauthor}}
\cortext[mycorrespondingauthor]{Corresponding author}
\ead{sujit@pmail.ntu.edu.sg}
%\author[mymainaddress]{Anamitra~Makur}
%\ead{eamakur@ntu.edu.sg}

\address[mymainaddress]{Department of Statistics and Applied Probability, Faculty of Science, National University of Singapore, Singapore}
%\address[mymainaddress]{School of Electrical \& Electronic Engineering, Nanyang Technological University, Singapore}

\begin{abstract}
In this paper the problem of image restoration (denoising and inpainting) is approached using sparse approximation of local image blocks. The local image blocks are extracted by sliding square windows over the image. An adaptive block size selection procedure for local sparse approximation is proposed, which affects the global recovery of underlying image. Ideally the adaptive local block selection yields the minimum mean square error (MMSE) in recovered image. This framework gives us a clustered image based on the selected block size, then each cluster is restored separately using sparse approximation. The results obtained using the proposed framework are very much comparable with the recently proposed image restoration techniques.
\end{abstract}

\begin{keyword}
denoising, inpainting, restoration, sparse representation, block size selection, adaptive block size.
\end{keyword}
\end{frontmatter}
\linenumbers

\section{Introduction}
\label{REV:sec:denoising}
{The natural images are generally sparse in some transform domain, which makes sparse representation an emerging tool to solve image processing problems.}
%---------------------------------------------------------------------------------
\subsection{Inpainting}
\label{REV:sec:inpainting}
Inpainting is a problem of filling up the missing pixels in an image by taking help of the existing pixels. In literatures, inpainting is often referred as disocclusion, which means to remove an obstruction or unmask a masked image. The success of inpainting lies on how well it infers the missing pixels from the observed pixels. It is a simple form of inverse problem, where the task is to estimate an image $X\in{\mathbb{R}^{\sqrt{N}\times \sqrt{N}}}$ from its measurement $Y\in{\mathbb{R}^{\sqrt{N}\times \sqrt{N}}}$ which is obstructed by a binary mask $B\in{{\{0,1\}}^{\sqrt{N}\times \sqrt{N}}}$.
\begin{equation}
\label{REV:eq1}
Y = X\circ{B} : B(i,j) = \left\{ 
  \begin{array}{l l}
    1 & \quad \text{if $(i,j)$ is observed}\\
    0 & \quad \text{if $(i,j)$ is obstructed}\\
  \end{array} \right.
\end{equation}

In literature, the problem of image inpainting has been addressed from different points of view, such as Partial Differential Equation (PDE), variational principle and exemplar region filling. 
An overview of these methods can be found in these recent articles \cite{Bugeau2010,Arias2011}. Apart from theses approaches, use of explicit sparse representation has produced very promising inpainting results \cite{MCA,EM}. Natural images are generally sparse in some transform domain, which makes sparse representation as an emerging tool for solving image processing problems. Inpainting is a fundamental problem in sparse representation which supports the arguments from compressed sensing \cite{CS}, where random sampling is one of the techniques.

In \cite{Sahoo2013a}, sparse representation is used to solve the image inpainting problem by performing local inpainting of small blocks of size $\sqrt{n}\times\sqrt{n}$. Thus, {the missing pixels of these small $\sqrt{n}\times\sqrt{n}$ images are needed to be filled up} individually.  Let's denote $\mathbf{x}\in\mathbb{R}^n$ as a columnized image block, and $\mathbf{b}\in \{0,1\}^n$ be the corresponding binary mask, then the individual corrupt image blocks can be presented as $\mathbf{y} =\mathbf{b}\circ \mathbf{x}$. {It is known} that it is possible to represent $\mathbf{x} = \mathbf{D}\mathbf{s}$ in a suitable dictionary $\mathbf{D} = [\mathbf{d}_1,\mathbf{d}_2,\dots,\mathbf{d}_K]$ as per the standard notations, where $\mathbf{s}\in\mathbb{R}^K$ is sparse (i.e. $\|\mathbf{s}\|_0\ll n$). Hence, {it is assumed} that $\mathbf{y}$ also has the same sparse representation $\mathbf{s}$ in
\[
[(\mathbf{b}{1^T_K})\circ\mathbf{D}] =  [\mathbf{b}\circ\mathbf{d}_1,\mathbf{b}\circ\mathbf{d}_2,\dots,\mathbf{b}\circ\mathbf{d}_K],
\]
where $1_K$ is a vector containing $K$ ones. Therefore, {a dictionary $\mathbf{D}$ is taken}, and estimate the sparse representation $\mathbf{s}$ for each corrupt image block as follows.
\begin{align}
\label{APP:Ieq1}
\hat{\mathbf{s}} &= \arg\min_{\mathbf{s}}{\|\mathbf{s}\|_0} \ \text{ such that }\ {\|\mathbf{y}-[(\mathbf{b}{1^T_K})\circ\mathbf{D}]\mathbf{s}\|_2^2 \leq \epsilon^2},
\end{align} 
where $\epsilon$ is the allowed representation error. After obtaining $\hat{\mathbf{s}}$, {the image block is restored} as $\hat{\mathbf{x}} = \mathbf{D}\hat{\mathbf{s}}$.

%---------------------------------------------------------------------------------
\subsection{Denoising}
\label{REV:sec:denoising}
Growth of semiconductor technologies has made the sensor arrays overwhelmingly dense, which makes the sensors more prone to noise. Hence denoising still remains an important research problem in image processing. Denoising is a form of challenging inverse problem, where the task is to estimate the signal $X$ from its measurement $Y$ which is corrupted by additive noise $V$,
\begin{equation}
\label{REV:eq1}
Y = X + V.
\end{equation}
Note that the noise $V$ is commonly modeled as Additive White Gaussian Noise (AWGN). 

In literature, the problem of image denoising has been addressed from different points of view such as statistical modeling, spatial adaptive filtering, and transfer domain thresholding \cite{ImageDenoisingReview}. In recent years image denoising using sparse representation has been proposed. The well known shrinkage algorithm by D. L. Donoho and L. M. Johnstone \cite{donoho95} is one example of such approach. In \cite{Elad2006}, M. Elad and M. Aharon has explicitly used sparsity as a prior for image denoising. In \cite{KLLD}, P. Chatterjee and P. Milanfar have clustered an image into $K$ clusters to enhance the sparse representation via locally learned dictionaries. 

The key idea in \cite{Elad2006} is to obtain a global denoising of the image by denoising overlapped local image blocks. Let's define $\mathbf{R}_{ij}$ as an $n\times N$ matrix that extracts a $\sqrt{n}\times\sqrt{n}$ block $\mathbf{x}_{ij}$ from the columnized image $X$ starting from its 2D coordinate $(i,j)$ \footnote{Basically, $\mathbf{R}_{ij}$ can be viewed as a matrix, which contains $n$ selected rows of an $N\times N$ identity matrix $\mathbf{I}_N$. Hence it picks $n$ elements from an $N$ dimensional vector.}. By sweeping across the coordinates $(i,j)$ of $X$, {overlapping local blocks can be extracted as $\{\forall_{ij} \mathbf{x}_{ij}=\mathbf{R}_{ij}X\}$}. {It is assumed that} there exists a sparse representation for any columnized image block $\mathbf{x}\in\mathbb{R}^n$ on a suitable dictionary $\mathbf{D}\in\mathbb{R}^{n\times K}$. That is,
\begin{align}
\hat{\mathbf{s}} = \arg\min_{\mathbf{s}}{\|\mathbf{s}\|_0} \ \text{ such that }\ {\|\mathbf{x}-\mathbf{D}\mathbf{s}\|_2^2\leq \epsilon^2}
\end{align}
where $\epsilon$ is the representation error tolerance. After obtaining $\hat{\mathbf{s}}$, {the image block is restored} as $\hat{\mathbf{x}} = \mathbf{D}\hat{\mathbf{s}}$.

%------------------------------------------------------------------------------------------------------------
\subsection{Motivatioon}
\label{BSS:sec:intro}
In the previous subsections, {the notion of image inpainting and denoising using sparse representation has been introduced, where the global image recovery is carried out through recovery of local image blocks.} The two main reasons behind the use of local image blocks are the following - (i) the smaller blocks take lesser computation time and storage space; (ii) the smaller image blocks contain lesser diversity, hence it is easier to obtain a sparse representation with fewer coefficients. Though, how much smaller the block size will be is left to the user, it has an impact on the recovery performance. This impact is due to a change in image content inside a local block with a change in block size. Thus, it will be better, if we can find a suitable block size at each location that performs the optimal recovery of an image. Nevertheless, the task is challenging, because we don't have the original image to verify the recovery performance. The possibilities of numerous block sizes makes it even more complicated. In this paper, {a framework of block size selection is proposed, which bypasses these challenges. Essentially, possible block sizes are prefixed to a limited number}, instead of dwelling around infinite possibilities. {Next, a block size selection criterion is formulated that uses the corrupt image alone. Some background of block size selection is introduced} in the next section, and in the subsequent sections {both the recovery frameworks (inpainting, denoising) is restated} in conjunction with block size selection. \footnote{The initial phases of this work and it's results has been presented in \cite{Sahoo2011i, Sahoo2011d}. This article provides the collective form of the work with detailed description and more results.} 
%-------------------------------------------------------------------------
\section{Local Block Size Selection}
\label{BSS:sec:blocksize}
In order to simplify the global recovery problem, local recoveries are undertaken as small steps. In general, local block size selection plays an important role in the setup of local to global recovery. In the language of signal processing,  this phenomenon of block size selection is often termed as bandwidth selection for local filtering. A natural question arises, that whether an optimal block size should be selected globally or locally. It is relatively easier to find a block size globally which yields the Minimum Mean Square Error (MMSE). Ideally, the optimal block size for local operation should be selected at each location of the image.  This is because the global mean square error (${MSE} =\frac{1}{N}\sum_{ij}{[X(i,j) -\hat{X}(i,j)]^2}$) is a collective contribution of the local mean square errors $\{\forall_{ij}{MSE}_{ij}=[X(i,j) -\hat{X}(i,j)]^2\}$, where $X$ is the original image of size $\sqrt{N}\times\sqrt{N}$ and $\hat{X}$ is the recovered image. Thus, the optimal block size for a pixel location $(i,j)$ is the one that gives minimum ${MSE}_{ij}$. In the absence of the original image $X$, this task becomes very challenging.

An earlier attempt towards adaptive block size selection can be found in \cite{ICI}, where each pixel is estimated pointwise using Local Polynomial Approximation (LPA). Increasing odd sized square blocks $n = n_1<n_2<n_3< \dots$ were taken centering over each pixel $(i,j)$, and the best estimate is obtained as $\hat{X}^{\hat{n}}(i,j)$. The task is to find $\hat{n} = \arg{ \min_{n}{MSE}_{ij}^n}=\arg{ \min_{n}\left[X(i,j) -\hat{X}^n(i,j)\right]^2}$, where $\hat{X}^n(i,j)$ is the obtained polynomial approximation of the pixel $X(i,j)$ with block size $\sqrt{n}\times\sqrt{n}$. At each pixel $(i,j)$, a confidence interval $\mathcal{D}(n) = [L^n, U^n]$ is obtained for all the block sizes $n= n_1<n_2<n_3< \dots$, 
\begin{align}
L_n = \hat{X}^n(i,j) - \gamma.\text{std}\left(\hat{X}^n(i,j)\right)\nonumber,\\
U_n = \hat{X}^n(i,j) + \gamma.\text{std}\left(\hat{X}^n(i,j)\right)\nonumber,
\end{align}
where $\gamma$ is a fixed constant and $\text{std}\left(\hat{X}^n(i,j)\right)$ is the standard deviation of $\hat{X}^n(i,j)$ over different $n$. In order to find the Intersection of Confidence Intervals (ICI), the intervals $\forall_n \mathcal{D}(n)$ are arranged in the increasing order of local block size $n$. The first block size at which all the intervals intersect is decided as the optimal block $\hat{n}$. It is theoretically proven that ICI will often select the block size with minimum ${MSE}_{ij}^n$. However, the success of ICI is dependent on the accurate estimation of $\hat{X}^n(i,j)$ and its standard deviation $\text{std}\left(\hat{X}^n(i,j)\right)$. In addition, ICI has a drawback that it can only be applied to single pixel recovery framework. Since, {more than one pixel of the estimated local blocks are used in the recovery frameworks, ICI will not help us selecting block size.}
%----------------------------------------------------------------------------------------------------------------------
\section{Inpainting using Local Sparse Representation}
\label{BSS:sec:inpaint}
In this problem, an image $X\in\mathbb{R}^{\sqrt{N}\times\sqrt{N}}$ is being occluded by a mask $B\in\{0,1\}^{\sqrt{N}\times\sqrt{N}}$, resulting in $Y=B\circ X$, {where ``$\circ$'' multiplies two matrices element wise.}. The goal is to find $\hat{X}$- the closest possible estimation of $X$. {In article \cite{Sahoo2013a}, $\hat{X}$ has been obtained in a simple manner by estimating each non-overlapping local block, where the motive was only to show the competitiveness of SGK \cite{Sahoo2013} dictionary over $K$-SVD \cite{KSVDd}.} However, {a better inpainting result can obtained} by considering overlapping local blocks. Thus, {a block extraction mechanism is adapted} based on the denoising framework of \cite{Elad2006}.

Here, {blocks of size $\sqrt{n}\times\sqrt{n}$ having a center pixel are explicitly considered,} which means $\sqrt{n}$ is an odd number. {A $n\times N$ matrix, $\mathbf{R}_{ij}^n$ is defined, which} extracts a $\sqrt{n}\times\sqrt{n}$ block $\mathbf{y}_{ij}^n$ from a $\sqrt{N}\times\sqrt{N}$ image $Y$ as $\mathbf{y}_{ij}^n = \mathbf{R}_{ij}^n{Y}$, where the block is centered over the pixel $(i,j)$. {Let's recall} that $Y$, $X$ and $B$ are columnized to $N\times1$ vector for this block extraction operation. Hence, sweeping across the 2D coordinates $(i,j)$ of $Y$, overlapping image blocks can be extracted, i.e. $\forall_{ij}\{\mathbf{y}^n_{ij}=\mathbf{R}_{ij}^n{Y}\} \in \mathbb{R}^n$. {The original image block is denoted} as $\mathbf{x}_{ij}^n$, and the corresponding local mask as $\mathbf{b}_{ij}^n \in \{0,1\}^n$, which makes the corrupt image block $\mathbf{y}_{ij}^n=\mathbf{x}_{ij}^n\circ\mathbf{b}_{ij}^n$.

Let $\mathbf{D}^n\in\mathbb{R}^{n\times K}$ be a known dictionary, where $\mathbf{x}^n_{ij}$ has a representation $\mathbf{x}_{ij}^n = \mathbf{D}^n\mathbf{s}_{ij}^n$, such that $\|\mathbf{s}_{ij}^n\|_0\ll n$. Similar to \cite{Sahoo2013a}, $\mathbf{s}_{ij}^n$ can be estimated as follows,
\[
\hat{\mathbf{s}}_{ij}^n = \arg\min_{\mathbf{s}}{\|\mathbf{s}\|_0} \ \text{ such that }\ {\left\|\mathbf{y}_{ij}^n-\left[\left(\mathbf{b}_{ij}^n{1^T_K}\right)\circ\mathbf{D}^n\right]\mathbf{s}\right\|_2^2 \leq \epsilon^2(n)},
\]
where $\epsilon(n)$ is the representation error tolerance. To have equal error tolerance per pixel irrespective of the block size, $\epsilon(n)=3\sqrt{n}$ {is set for the} experiment, which gives an error tolerance of $3$ gray levels per pixel. {Using the estimated sparse representations, the inpainted local image blocks are obtained as $\left\{\forall_{ij}\ \hat{\mathbf{x}}_{ij}^n = \mathbf{D}^n\hat{\mathbf{s}}_{ij}^n\right\}$.} In spite of equal error tolerance per pixel, the estimation mean square error ($\frac{1}{n}\left\|\mathbf{x}_{ij}^n - \hat{\mathbf{x}}_{ij}^n\right\|_2^2$) varies with block size $n$. It is because at some location, dictionary of some block size may fit better with the available pixels than another block size, which basically depends on the image content in that locality. Hence a MMSE based block size selection becomes essential.
\begin{figure}
\centering\vspace{-0.0cm}
\includegraphics[width=4.1in]{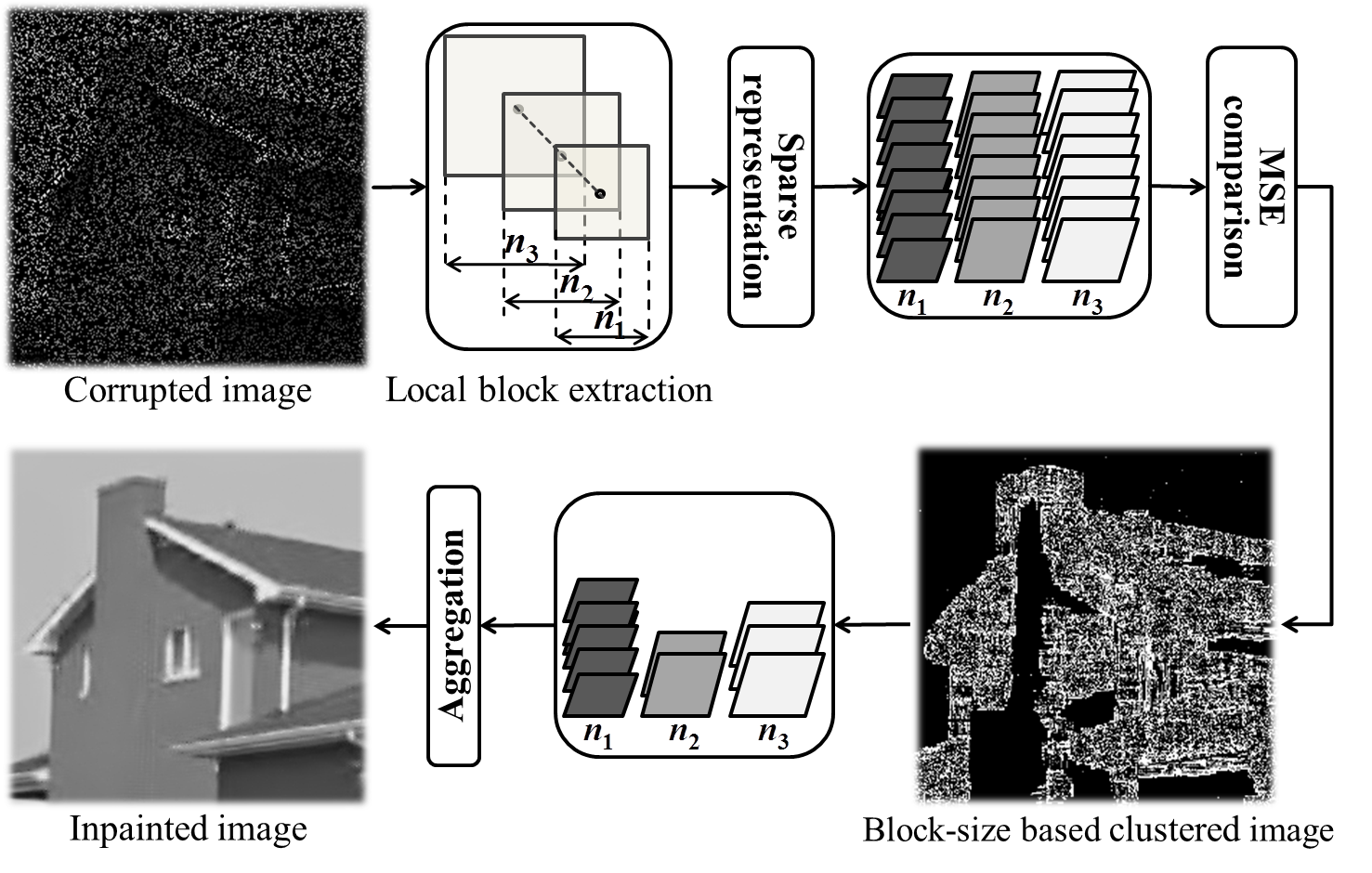}\vspace{-0.15cm}
% where an .eps filename suffix will be assumed under latex, 
% and a .pdf suffix will be assumed for pdflatex
\caption{Block schematic diagram of the proposed image inpainting framework.}
\label{BSS:Ifig1}
\end{figure}
%
%-------------------------------------------------------------------------
\subsection{Local Block Size Selection for Inpainting}
\label{BSS:imp:blocksize}
The effect of block size is very perceptive in inpainting using local sparse representation. As bigger block sizes capture more details from the image, smaller block sizes are preferred for local sparse representation. However, bigger block sizes are suitable for inpainting as it is hard to follow the trends of the geometrical structures in small block sizes, even in visual perspective. So, there exists a trade-off between the block size and accuracy of fitting. In the absence of the original image, {some measure need to be derived to reach}
\begin{equation}
\label{BSS:Ieq1}
\min_{n} {MSE}_{ij}^n = \min_{n}\frac{1}{n}\left\|\mathbf{x}_{ij}^n - \hat{\mathbf{x}}_{ij}^n\right\|_2^2.
\end{equation}
In order to solve the aforementioned problem an approximation for ${MSE}_{ij}^n$ is carried out. It is done by computing the ${MSE}_{ij}^n$ for the observed pixels only. Thus, it can be written as
\[
\widehat{MSE}_{ij}^n= \frac{1}{{\mathbf{b}_{ij}^n}^T\mathbf{b}_{ij}^n}{\left\|{\mathbf{b}_{ij}^n}\circ\left(\mathbf{x}_{ij}^n - \hat{\mathbf{x}}_{ij}^n\right)\right\|}_2^2 = \frac{1}{{\mathbf{b}_{ij}^n}^T\mathbf{b}_{ij}^n}{\left\|\mathbf{y}_{ij}^n - {\mathbf{b}_{ij}^n}\circ{\hat{\mathbf{x}}_{ij}^n}\right\|}_2^2.
\]
$\widehat{MSE}_{ij}^n$ are computed at each pixel $(i,j)$ for different $n$, and {the block size $\hat{n}=\arg\min_{n}\widehat{MSE}_{ij}^n$ empirically obtained.} Then in a separate image space $W(i,j) = \hat{n}$ is marked, which gives us a clustered image based on the selected block size.
%------------#
\begin{figure}
\centering\vspace{-0.0cm}
\begin{minipage}[b]{0.327\linewidth}
  \centering
\centerline{\epsfig{figure=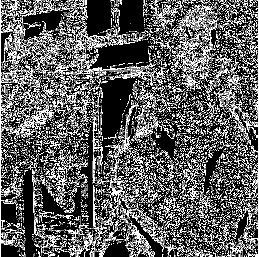,width=4.0cm}}\vspace{-0.18cm}
\centerline{\scriptsize 80\% missing pixel Barbra}
\end{minipage}
\begin{minipage}[b]{0.327\linewidth}
  \centering
\centerline{\epsfig{figure=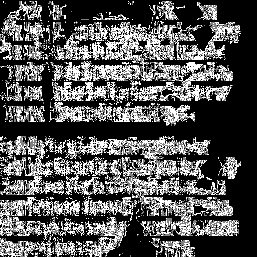,width=4.0cm}}\vspace{-0.19cm}
\centerline{\scriptsize Text printed on Lena} 
\end{minipage}
\begin{minipage}[b]{0.327\linewidth}
  \centering
\centerline{\epsfig{figure=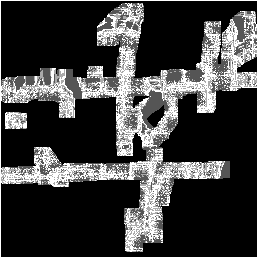,width=4.0cm}}\vspace{-0.18cm}
\centerline{\scriptsize Mascara on Girls image}
\end{minipage}
\caption{Illustration of the block size selection for inpainting.}
\label{BSS:Ifig2}
\end{figure}
%------------------------------------------------------------------------------------------
\subsection{Implementation Details of Inpainting}
\label{BSS:imp:imp}
The framework is implemented according to the flowchart presented in Figure \ref{BSS:Ifig1}. In practice, the comparison of the sample mean square error will be unfair among the blocks of different size $n = n_1<n_2<n_3< \dots$, because the number of samples are different for each block size. In order to stay unbiased, ${MSE}_{ij}^n$ for each block is computed only over the region covered with the smallest block size $n_1$. The comparison is done in terms of ${\widehat{MSE}_{ij}^n=\frac{1}{{\mathbf{b}_{ij}^{n_1}}^T\mathbf{b}_{ij}^{n_1}}{\left\|\mathbf{R}_{ij}^{n_1}{\mathbf{R}_{ij}^{n}}^T\left(\mathbf{y}_{ij}^n - {\mathbf{b}_{ij}^n}\circ{\hat{\mathbf{x}}_{ij}^n}\right)\right\|}_2^2}$, where $\mathbf{R}_{ij}^{n_1}{\mathbf{R}_{ij}^n}^T$ extracts the common pixels that are covered with block size $n_1$.

Since $\widehat{MSE}_{ij}^n$ only compares the region covered with $n_1$ for any center pixel $(i,j)$, {only those recovered pixels are used, which are covered with $n_1$, that is $\hat{\mathbf{x}}_{ij}^{n_1}=\mathbf{R}_{ij}^{n_1}{\mathbf{R}_{ij}^n}^T\hat{\mathbf{x}}_{ij}^{\hat{n}}$.} Then {the global inpainted image is recovered} from these local inpainted image blocks $\left\{\forall_{ij}\hat{\mathbf{x}}_{ij}^{n_1}\right\}$. Thus, {a MAP estimator is formulated similar to the denoising framework of \cite{Elad2006},}
\[
\hat{X} =\arg\min_{X}\left\{ \lambda\left\|Y-B\circ X\right\|_2^2 +\sum_{ij}{\left\|\mathbf{R}_{ij}^{n_1}X-\hat{\mathbf{x}}_{ij}^{n_1}\right\|_2^2}\right\}.
\]
{Differentiating the right hand side quadratic expression with respect to $X$, the following solution can be obtained.}
\begin{align}
\label{BSS:Ieq2}
-\lambda B\circ\left[Y- B\circ\hat{X}\right]+\sum_{ij}{{\mathbf{R}_{ij}^{n_1}}^T\left[\mathbf{R}_{ij}^{n_1}\hat{X}-\hat{\mathbf{x}}_{ij}^{n_1}\right]} = 0 \nonumber\\
\hat{X} = \left[\lambda\mathrm{diag}(B) + \sum_{ij}{{\mathbf{R}_{ij}^{n_1}}^T \mathbf{R}_{ij}^{n_1}}\right]^{-1}\left[\lambda{Y} + \sum_{ij}{{\mathbf{R}_{ij}^{n_1}}^T\hat{\mathbf{x}}_{ij}^{n_1}}\right]
\end{align}
This expression means that averaging of the inpainted image blocks is to be done, with some relaxation obtained from the corrupt image. Hence $\lambda\propto{1}/{r}$, where $r$ is the fraction of pixels to be inpainted \footnote{{All the experimental results are obtained} keeping $\lambda= {60}/{r}$}. The matrix to invert in the above expression is a diagonal one, hence the calculation of (\ref{BSS:Ieq2}) can be done on a pixel-by-pixel basis after $\{\forall_{ij}\hat{\mathbf{x}}_{ij}^{n_1}\}$ is obtained.

%---------------------------------------------------------------------------------
\section{Denoising Using Local Sparse Representation}
\label{BSS:sec:denoise}
Similar to the earlier stated inpainting framework, {square blocks of size $\sqrt{n}\times\sqrt{n}$ with a center pixel are considered}, which means $n$ is an odd number. Sweeping across the coordinate $(i,j)$ of $Y$, {the overlapping local blocks are extracted}, that is $\forall_{ij} \{\mathbf{y}_{ij}^n=\mathbf{R}_{ij}^n{Y}\} \in \mathbb{R}^n$. {The original image block is denoted} as $\mathbf{x}_{ij}^n$, and the noise as $\mathbf{v}_{ij}^n \in \mathbf{N}^n\left(0,\sigma^2\right)$, making the noisy image block $\mathbf{y}_{ij}^n=\mathbf{x}_{ij}^n+\mathbf{v}_{ij}^n$. 
\begin{figure}
\centering\vspace{-0.0cm}
\includegraphics[width=4.1in]{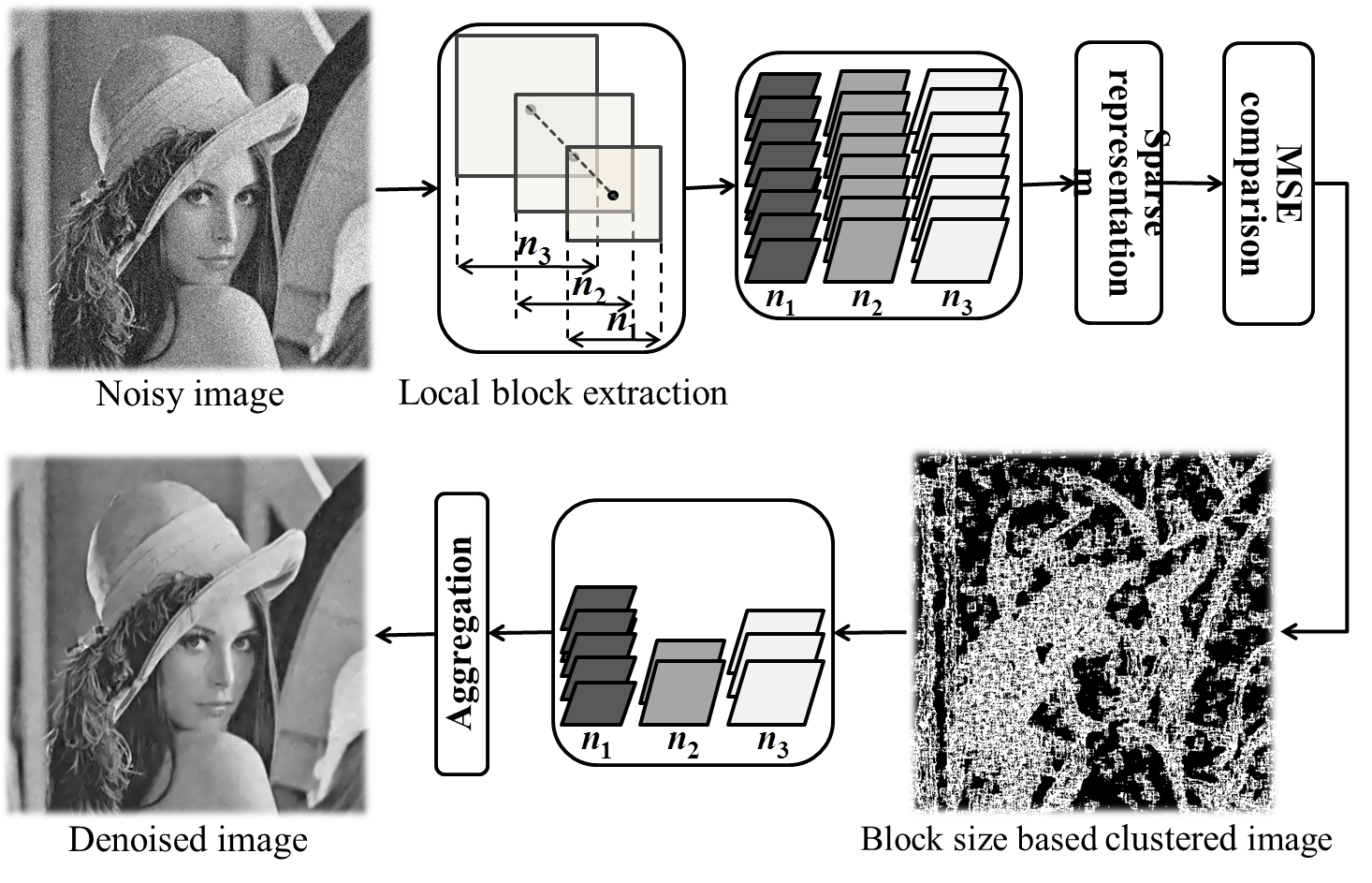}\vspace{-0.15cm}
% where an .eps filename suffix will be assumed under latex, 
% and a .pdf suffix will be assumed for pdflatex
\caption{Block schematic diagram of the proposed image denoising framework.}
\label{BSS:Dfig1}
\end{figure}

Let $\mathbf{D}^n$ be a known dictionary, where $\mathbf{x}^n_{ij}$ has a representation  $\mathbf{x}_{ij}^n = \mathbf{D}^n\mathbf{s}_{ij}^n$, and $\mathbf{s}_{ij}^n$ is sparse. Since the additive random noise will not be sparse in any dictionary, $\mathbf{s}_{ij}^n$ is estimated as 
\begin{align}
\label{BSS:Deq1}
\hat{\mathbf{s}}_{ij}^n = \arg{ \min_{\mathbf{s}}{\| \mathbf{s}\|}_0 } \;  \mbox{i.e.} \; {\|\mathbf{y}_{ij}^n - \mathbf{D}^n\mathbf{s}\|}_2^2 \leq \epsilon^2(n), 
\end{align}
where $\epsilon(n) \geq \|\mathbf{v}_{ij}^n\|_2$. According to multidimensional Gaussian distribution, if $\mathbf{v}_{ij}^n$ is an $n$ dimensional Gaussian vector, ${\| \mathbf{v}_{ij}^n \|}_2^2$ is distributed by generalized Rayleigh law,
\begin{align} 
\label{BSS:Deq2}
\mathbf{Pr}\left({\left\|\mathbf{v}_{ij}^n\right\|}_2^2 \leq n(1+\varepsilon) \sigma^2\right) = \frac{1}{\Gamma( \frac{n}{2})}\int_{z=0}^{\frac{n(1+\varepsilon)}{2}} z^{ \frac{n}{2} - 1} e^{-z} dz.
\end{align}
By taking $\epsilon^2(n)=n(1+\varepsilon)\sigma^2$, for an appropriately bigger value of $\varepsilon$, it guarantees the sparse representation to be out of the noise radius with high probability. 

Thus, by using the estimated sparse representations, {the denoised local image blocks can be obtained as $\left\{\forall_{ij}\ \hat{\mathbf{x}}_{ij}^n = \mathbf{D}^n\hat{\mathbf{s}}_{ij}^n\right\}$}. Since the increase in block size causes decrease in the correlation between signal and noise, $\varepsilon$ is reduced with increase in $n$ to maintain an equal probability of denoising irrespective of block sizes. In spite of that the mean square error ($\frac{1}{n}\left\|\mathbf{x}_{ij}^n - \hat{\mathbf{x}}_{ij}^n\right\|_2^2$) varies with block size $n$. This is because an equal probability of the estimation being away from the noise radius does not  imply equal closeness to the signal. As the dictionary of some block size matches better with the signal compared to the other, a minimum mean square error (MMSE) based block size selection becomes essential.
\begin{figure}
\centering\vspace{-0.5cm}
\begin{minipage}[b]{0.327\linewidth}
  \centering
\centerline{\epsfig{figure=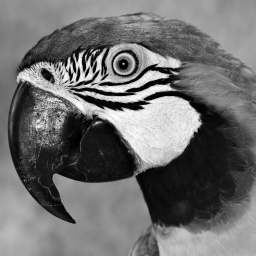,width=4.0cm}}\vspace{-0.3cm}
\centerline{\footnotesize Parrot }
\end{minipage}
\begin{minipage}[b]{0.327\linewidth}
  \centering
\centerline{\epsfig{figure=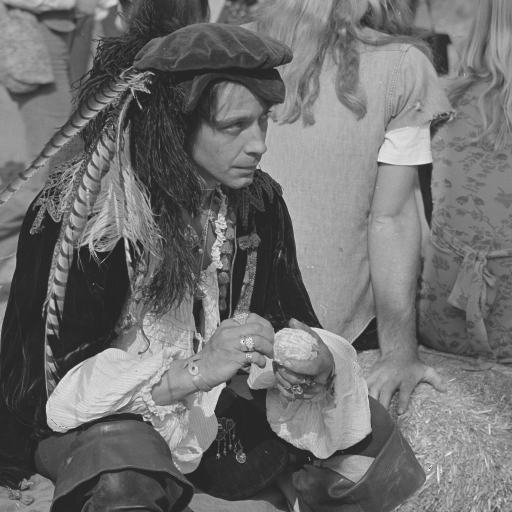,width=4.0cm}}\vspace{-0.3cm}
\centerline{\footnotesize Man }
\end{minipage}
\begin{minipage}[b]{0.327\linewidth}
  \centering
\centerline{\epsfig{figure=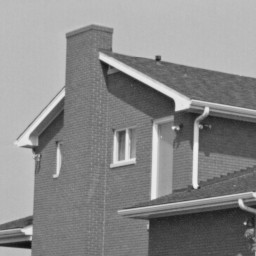,width=4.0cm}}\vspace{-0.3cm}
\centerline{\footnotesize House }
\end{minipage}

\begin{minipage}[b]{0.327\linewidth}
  \centering
\centerline{\epsfig{figure=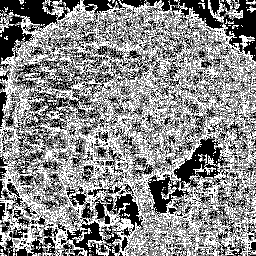,width=4.0cm}}\vspace{-0.3cm}
\centerline{\footnotesize $\sigma=5$}
\end{minipage}
\begin{minipage}[b]{0.327\linewidth}
  \centering
\centerline{\epsfig{figure=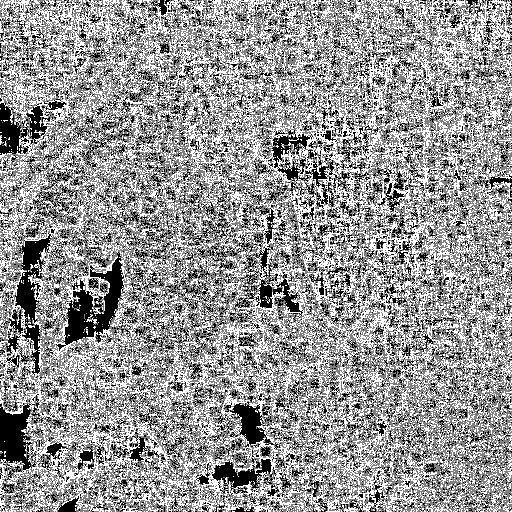,width=4.0cm}}\vspace{-0.3cm}
\centerline{\footnotesize $\sigma=5$}
\end{minipage}
\begin{minipage}[b]{0.327\linewidth}
  \centering
\centerline{\epsfig{figure=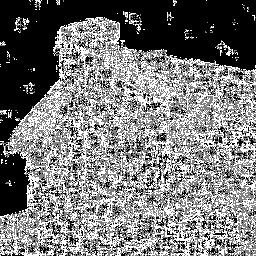,width=4.0cm}}\vspace{-0.3cm}
\centerline{\footnotesize $\sigma=5$}
\end{minipage}

\begin{minipage}[b]{0.327\linewidth}
  \centering
\centerline{\epsfig{figure=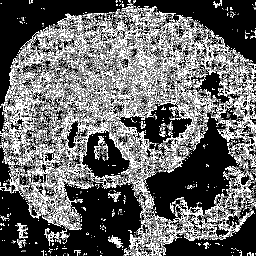,width=4.0cm}}\vspace{-0.3cm}
\centerline{\footnotesize $\sigma=15$}
\end{minipage}
\begin{minipage}[b]{0.327\linewidth}
  \centering
\centerline{\epsfig{figure=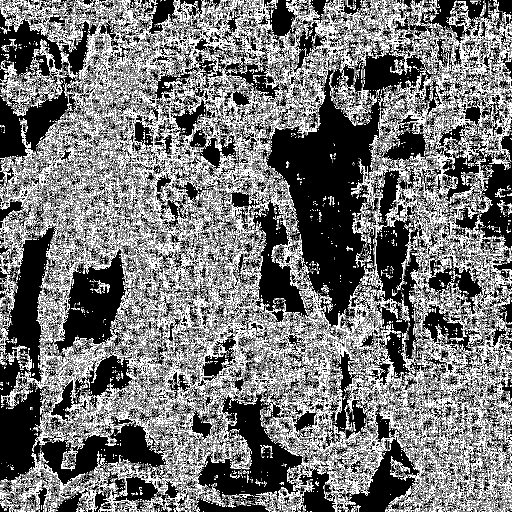,width=4.0cm}}\vspace{-0.3cm}
\centerline{\footnotesize $\sigma=15$}
\end{minipage}
\begin{minipage}[b]{0.327\linewidth}
  \centering
\centerline{\epsfig{figure=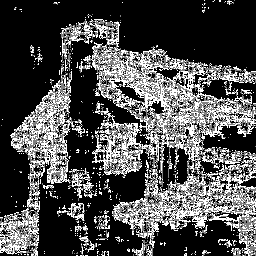,width=4.0cm}}\vspace{-0.3cm}
\centerline{\footnotesize $\sigma=15$}
\end{minipage}

\begin{minipage}[b]{0.327\linewidth}
  \centering
\centerline{\epsfig{figure=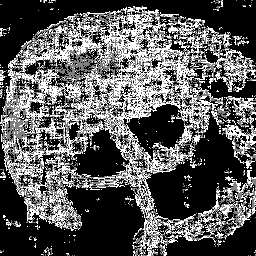,width=4.0cm}}\vspace{-0.3cm}
\centerline{\footnotesize $\sigma=25$}
\end{minipage}
\begin{minipage}[b]{0.327\linewidth}
  \centering
\centerline{\epsfig{figure=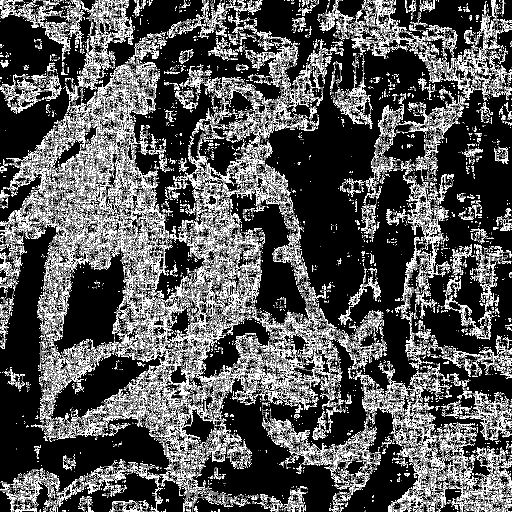,width=4.0cm}}\vspace{-0.3cm}
\centerline{\footnotesize $\sigma=25$}
\end{minipage}
\begin{minipage}[b]{0.327\linewidth}
  \centering
\centerline{\epsfig{figure=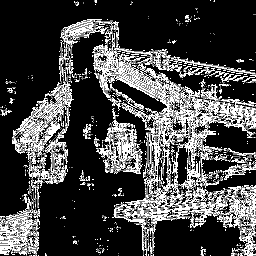,width=4.0cm}}\vspace{-0.3cm}
\centerline{\footnotesize $\sigma=25$}
\end{minipage}\vspace{-0.5cm}
\caption{Illustration of clustering based on block size selection for AWGN of various $\sigma$.}
\label{BSS:Dfig2}
\end{figure}
%
%------------------------------------------------------------------------
\subsection{Local Block Size Selection for Denoising}
\label{BSS:den:blocksize}
The effect of block size is also very intuitive in denoising using sparse representation: bigger block sizes capture more details from the image, giving rise to more nonzero coefficients. Hence smaller block sizes are preferred for local sparse representation. In contrast, it is hard to distinguish between signal and noise in small sized blocks even in visual perspective,  hence bigger block sizes are suitable for denoising. Thus, there exists a trade-off between the block size and accuracy of fitting. In the absence of the noise free image, {some measure need to be derived to reach}
\begin{equation}
\label{BSS:Deq2}
\min_{n} {MSE}_{ij}^n = \min_{n}\frac{1}{n}\left\|\mathbf{x}_{ij}^n - \hat{\mathbf{x}}_{ij}^n\right\|_2^2.
\end{equation}
In order to solve the aforementioned problem, an approximation for $\min_{n} {MSE}_{ij}^n$ is carried out. It is known that the original image block $\mathbf{x}_{ij}^n=\mathbf{y}_{ij}^n-\mathbf{v}_{ij}^n$, hence after taking the expectation for the noise, it can be written that
\begin{align}
{MSE}_{ij}^n &= \frac{1}{n}\mathbf{E}\left[{\left\|\left(\mathbf{y}_{ij}^n - \hat{\mathbf{x}}_{ij}^n\right) -\mathbf{v}_{ij}^n\right\|}_2^2\right]\nonumber\\
 &=  \frac{1}{n}\mathbf{E}\left[{\left\|\mathbf{y}_{ij}^n - \hat{\mathbf{x}}_{ij}^n\right\|}_2^2\right] - \frac{1}{n}\mathbf{E}\left[\mathbf{v}_{ij}^n\left(\mathbf{y}_{ij}^n - \hat{\mathbf{x}}_{ij}^n\right)^T\right] \nonumber\\
&\qquad\qquad\qquad - \frac{1}{n}\mathbf{E}\left[\left(\mathbf{y}_{ij}^n - \hat{\mathbf{x}}_{ij}^n\right){\mathbf{v}_{ij}^n}^T\right] +  \frac{1}{n}\mathbf{E}\left[{\left\| \mathbf{v}_{ij}^n\right\|}_2^2\right].\nonumber
\end{align}
Heuristically, for a sufficiently large value of $\varepsilon$ in (\ref{BSS:Deq1}) the estimation $\hat{\mathbf{x}}_{ij}^n$ can be kept away from the noise $\mathbf{v}_{ij}^n$. Thus, $\mathbf{E}\left[\mathbf{v}_{ij}^n\left(\mathbf{y}_{ij}^n - \hat{\mathbf{x}}_{ij}^n\right)^T\right] = \mathbf{E}\left[\left(\mathbf{y}_{ij}^n - \hat{\mathbf{x}}_{ij}^n\right){\mathbf{v}_{ij}^n}^T\right]\sim\mathbf{E}\left[{\left\| \mathbf{v}_{ij}^n\right\|}_2^2\right]$, which gives an approximation of ${MSE}_{ij}^n$, 
\[
\widehat{MSE}_{ij}^n= \frac{1}{n}\mathbf{E}\left[{\left\| \mathbf{y}_{ij}^n - \hat{\mathbf{x}}_{ij}^n \right\|}_2^2\right]- \frac{1}{n}\mathbf{E}\left[{\left\|\mathbf{v}_{ij}^n\right\|}_2^2\right].
\]
$\widehat{MSE}_{ij}^n$ are computed at each pixel $(i,j)$ for different $n$, and {the block size $\hat{n}=\arg\min_{n}\widehat{MSE}_{ij}^n$ is obtained} empirically. Then in a separate image space {$W(i,j) = \hat{n}$ is marked}, which gives us a clustered image based on the selected block size.

%----------------------------------------------------------------------------------------------------------------
\subsection{Implementation Details of Denoising }
\label{BSS:den:imp}
The framework is implemented according to the flowchart presented in Figure \ref{BSS:Dfig1}.  In practice, the comparison of the sample mean square error will be unfair among the blocks of different size $n = n_1<n_2<n_3< \dots$, because the number of samples are different for each block size. In order to stay unbiased, ${MSE}_{ij}^n$ for each block is computed only over the region covered with the smallest block size $n_1$. The comparison is done in terms of $\widehat{MSE}_{ij}^n= \frac{1}{n_1}{\left\|\mathbf{R}_{ij}^{n_1}{\mathbf{R}_{ij}^n}^T(\mathbf{y}_{ij}^n - \hat{\mathbf{x}}_{ij}^n)\right\|}_2^2-\frac{1}{n_1}{\left\|\mathbf{v}_{ij}^{n_1}\right\|}_2^2$, where $\mathbf{R}_{ij}^{n_1}{\mathbf{R}_{ij}^n}^T$ extracts the common pixels that are covered with block size $n_1$.

It is also important to ensure that, irrespective of $n$, each estimated $\hat{\mathbf{x}}_{ij}^n$ is noise free with equal probability. {Hence, the following result is established} to maintain equal lower bound probabilities of denoising across $n$.
\begin{lemma}
\label{BSS:lemma1}
For an additive zero mean white Gaussian noise $\mathbf{v}_{ij}^n\in\mathbf{N}[0, \mathbf{I}_n\sigma^2]$, and the observed signal $\mathbf{y}_{ij}^n = \mathbf{D}^n\mathbf{s}_{ij}^n + \mathbf{v}_{ij}^n$, we will have a constant lower-bound for probability $\mathbf{Pr}({\|\mathbf{y}_{ij}^n - \mathbf{D}^n\mathbf{s}_{ij}^n\|}_2^2 < n (1+\varepsilon)\sigma^2)$ over $n$, by taking $\varepsilon = \frac{\varepsilon_0}{\sqrt{n}}$. 
\end{lemma}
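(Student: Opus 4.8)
The plan is to notice that the event in question involves only the noise vector, reduce it to a chi-squared tail probability, and then exploit that the fluctuations of a $\chi^2_n$ variable around its mean are of order $\sqrt n$ --- exactly the order that the proposed scaling $\varepsilon=\varepsilon_0/\sqrt n$ is designed to cancel.

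\textbf{Step 1: reduce to the noise.} Since $\mathbf{y}_{ij}^n-\mathbf{D}^n\mathbf{s}_{ij}^n=\mathbf{v}_{ij}^n$ identically, the probability to be bounded is $\mathbf{Pr}\!\left({\|\mathbf{v}_{ij}^n\|}_2^2<n(1+\varepsilon)\sigma^2\right)$. Writing the components of $\mathbf{v}_{ij}^n$ as i.i.d.\ $\mathbf{N}(0,\sigma^2)$ variables $v_1,\dots,v_n$, I would set $Q_n:={\|\mathbf{v}_{ij}^n\|}_2^2/\sigma^2=\sum_{k=1}^n(v_k/\sigma)^2$, which is $\chi^2$ with $n$ degrees of freedom (the generalized Rayleigh law recalled above); in particular $\mathbf{E}[Q_n]=n$ and $\mathrm{Var}(Q_n)=2n$.

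\textbf{Step 2: insert the scaling.} With $\varepsilon=\varepsilon_0/\sqrt n$ the threshold is $n(1+\varepsilon)=n+\varepsilon_0\sqrt n$, so the quantity of interest is
\[
p_n := \mathbf{Pr}\!\left(Q_n-n<\varepsilon_0\sqrt n\right) = \mathbf{Pr}\!\left(\frac{Q_n-n}{\sqrt{2n}}<\frac{\varepsilon_0}{\sqrt 2}\right).
\]
The role of the $1/\sqrt n$ factor is now transparent: it is tuned to the $\sqrt{2n}$ standard deviation of $Q_n$, leaving the right-hand side a fixed constant $\varepsilon_0/\sqrt 2$ that does not depend on $n$.

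\textbf{Step 3: obtain a bound uniform in $n$.} The cleanest fully rigorous route is the one-sided Chebyshev (Cantelli) inequality applied to $Q_n-n$, which has mean $0$ and variance $2n$:
\[
\mathbf{Pr}\!\left(Q_n-n\ge\varepsilon_0\sqrt n\right)\le\frac{2n}{2n+\varepsilon_0^2 n}=\frac{2}{2+\varepsilon_0^2},
\]
whence $p_n\ge\dfrac{\varepsilon_0^2}{2+\varepsilon_0^2}$ for every $n$, a positive constant depending only on $\varepsilon_0$ --- precisely the claimed uniform lower bound. To also exhibit that $p_n$ actually \emph{stabilizes} (which is the motivation for the lemma), I would invoke the Lindeberg--L\'evy central limit theorem for the i.i.d.\ summands $(v_k/\sigma)^2$: $(Q_n-n)/\sqrt{2n}$ converges in distribution to $\mathbf{N}(0,1)$, so $p_n\to\Phi(\varepsilon_0/\sqrt 2)\in(\tfrac12,1)$.

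\textbf{Main obstacle.} Everything here is a routine identification of distributions except the quantifier ``over $n$'': the CLT on its own delivers only the limiting value of $p_n$, not a bound valid simultaneously for all $n$. The hard part is therefore choosing a tool that is uniform from the start --- the Cantelli estimate does this elementarily, and as an alternative one could pair a Berry--Esseen correction with the observation that any finite collection of the strictly positive numbers $p_1,\dots,p_{N-1}$ has a strictly positive minimum, which together with the tail of the convergent sequence $(p_n)_{n\ge N}$ gives a uniform positive lower bound. With that in hand the lemma follows.
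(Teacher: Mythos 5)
Your proof is correct, and it reaches the same structural conclusion as the paper --- that the choice $\varepsilon=\varepsilon_0/\sqrt n$ places the threshold $n(1+\varepsilon)\sigma^2$ a fixed number of standard deviations above $\mathbf{E}\bigl[{\|\mathbf{v}_{ij}^n\|}_2^2\bigr]=n\sigma^2$, so the tail probability is bounded uniformly in $n$ --- but via a different tail inequality. The paper invokes a Chernoff-type bound $\mathbf{Pr}({\|\mathbf{v}_{ij}^n\|}_2^2\ge n(1+\varepsilon)\sigma^2)\le e^{-c_0\varepsilon^2 n}$ (attributed to Hoeffding), whose exponent becomes the constant $c_0\varepsilon_0^2$ under the prescribed scaling, yielding the lower bound $1-e^{-c_0\varepsilon_0^2}$. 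You instead apply Cantelli's one-sided Chebyshev inequality to the $\chi^2_n$ variable $Q_n$ with variance $2n$, obtaining the fully explicit bound $\varepsilon_0^2/(2+\varepsilon_0^2)$. Your route is more elementary and arguably more self-contained: the paper's cited inequality is stated for bounded random variables, and the sub-exponential version for squared Gaussians holds in the form $e^{-c_0\varepsilon^2 n}$ only for $\varepsilon$ in a bounded range (which is satisfied here since $\varepsilon\to 0$, but the paper does not say so), whereas Cantelli needs only the variance and carries an explicit constant. The trade-off is that the Chernoff bound gives a lower bound much closer to $1$ for moderate $\varepsilon_0$ (the paper wants a $96\%$ denoising probability, which your $\varepsilon_0^2/(2+\varepsilon_0^2)$ cannot certify for reasonable $\varepsilon_0$); your closing CLT remark, giving $p_n\to\Phi(\varepsilon_0/\sqrt 2)$, recovers the sharper quantitative picture asymptotically but, as you correctly note, is not by itself uniform in $n$.
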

\begin{proof}
${\| \mathbf{v}_{ij}^n \|}_2^2$ is a random variable formed out of sum squared of $n$ Gaussian random variables, and $E[{\| \mathbf{v}_{ij}^n\|}_2^2]=n\sigma^2$. Using Chernoff bound \cite{Hoeffding1963}, it can be stated that
\[
\mathbf{Pr}({\| \mathbf{v}_{ij}^n \|}_2^2 \geq n(1+\varepsilon)\sigma^2)  \leq e^{-c_0{\varepsilon^2}n}.
\]
The minimum possible estimation error is ${\|\mathbf{y}_{ij}^n - \mathbf{D}^n\mathbf{s}_{ij}^n\|}_2^2 = {\| \mathbf{v}_{ij}^n \|}_2^2$, and $\mathbf{Pr}( {\| \mathbf{v}_{ij}^n \|}_2^2 < n(1+\varepsilon)\sigma^2) = 1- \mathbf{Pr}({\| \mathbf{v}_{ij}^n \|}_2^2 \geq n(1+\varepsilon)\sigma^2)$. For $\varepsilon =\frac{\epsilon_0}{\sqrt{n}}$, it gives
\[
\mathbf{Pr}({\|\mathbf{y}_{ij}^n - \mathbf{D}^n\mathbf{s}_{ij}^n\|}_2^2 < n(1+\varepsilon)\sigma^2) >1-e^{-c_0{(\frac{\epsilon_0}{\sqrt{n}})^2}n} =1- e^{-c_0{\epsilon_0^2}},
\]
which is a constant lower-bound irrespective of $n$.
\end{proof} 

Similar to the inpainting problem, {the common denoised pixels are extracted} as per the smallest block size $n_1$ after block size is selected for any pixel location $(i,j)$, i.e. $\hat{\mathbf{x}}_{ij}^{n_1}=\mathbf{R}_{ij}^{n_1}{\mathbf{R}_{ij}^n}^T\hat{\mathbf{x}}_{ij}^n$. Then {the overlapping local blocks are average} to recover each pixel of the image, 
\begin{align} 
\label{BSS:Deq3}
\hat{X}=\left[\lambda\mathbf{I}_N + \sum_{ij}{{\mathbf{R}_{ij}^{n_1}}^T \mathbf{R}_{ij}^{n_1}}\right]^{-1}\left[\lambda{Y} +  \sum_{ij}{{\mathbf{R}_{ij}^{n_1}}^T\hat{\mathbf{x}}_{ij}^{n_1}}\right],
\end{align}
which is same as the MAP based local to global recovery in \cite{Elad2006}. 

{It is known} that a better dictionary produces a better denoising result, and that the dictionary training algorithms are capable of performing in presence of noise. Hence, from the noisy image, {trained dictionaries are obtained}, similar to \cite{Elad2006}, and then {the image are denoised} using the block size selection framework presented in Figure \ref{BSS:Dfig1}.
\begin{figure}
\centering\vspace{-0.5cm}
\begin{minipage}[b]{0.327\linewidth}
  \centering
\centerline{\epsfig{figure=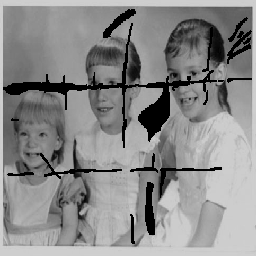,width=4.0cm}}\vspace{-0.3cm}
\centerline{\footnotesize Mascara on Girls}
\end{minipage}
\begin{minipage}[b]{0.327\linewidth}
  \centering
\centerline{\epsfig{figure=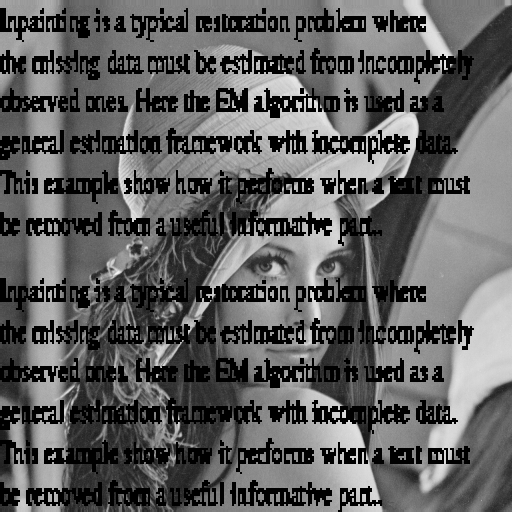,width=4.0cm}}\vspace{-0.3cm}
\centerline{\footnotesize Text on Lena}
\end{minipage}
\begin{minipage}[b]{0.327\linewidth}
  \centering
\centerline{\epsfig{figure=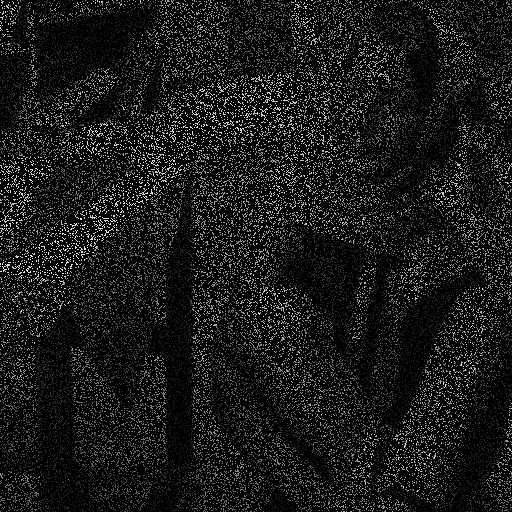,width=4.0cm}}\vspace{-0.3cm}
\centerline{\footnotesize 80\% missing pixel Barbara}
\end{minipage}

\begin{minipage}[b]{0.327\linewidth}
  \centering
\centerline{\epsfig{figure=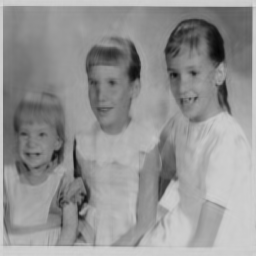,width=4.0cm}}\vspace{-0.3cm}
\centerline{\footnotesize EM \cite{EM}}
\end{minipage}
\begin{minipage}[b]{0.327\linewidth}
  \centering
\centerline{\epsfig{figure=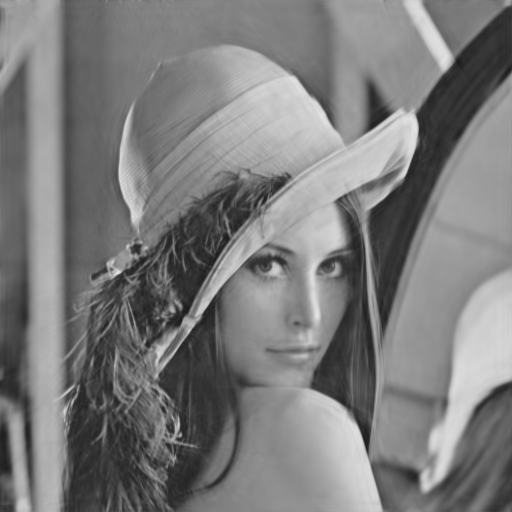,width=4.0cm}}\vspace{-0.3cm}
\centerline{\footnotesize EM \cite{EM} (31.26 dB)}
\end{minipage}
\begin{minipage}[b]{0.327\linewidth}
  \centering
\centerline{\epsfig{figure=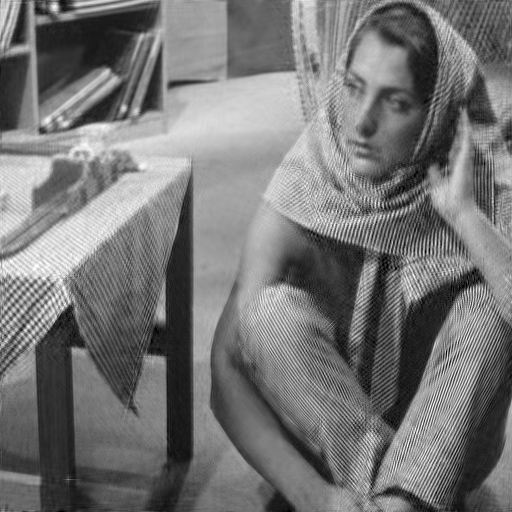,width=4.0cm}}\vspace{-0.3cm}
\centerline{\footnotesize EM \cite{EM} (27.13 dB)}
\end{minipage}

\begin{minipage}[b]{0.327\linewidth}
  \centering
\centerline{\epsfig{figure=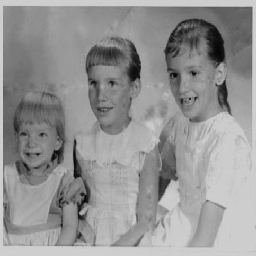,width=4.0cm}}\vspace{-0.3cm}
\centerline{\footnotesize MCA \cite{MCA}}
\end{minipage}
\begin{minipage}[b]{0.327\linewidth}
  \centering
\centerline{\epsfig{figure=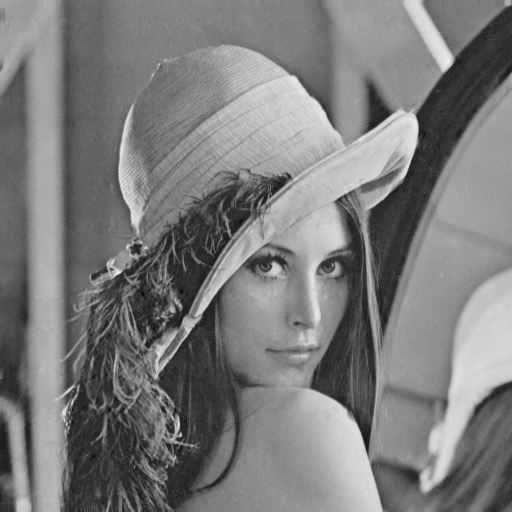,width=4.0cm}}\vspace{-0.3cm}
\centerline{\footnotesize MCA \cite{MCA} (34.18 dB)}
\end{minipage}
\begin{minipage}[b]{0.327\linewidth}
  \centering
\centerline{\epsfig{figure=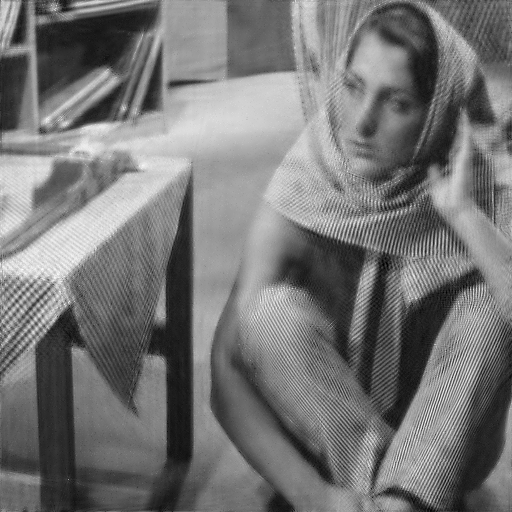,width=4.0cm}}\vspace{-0.3cm}
\centerline{\footnotesize MCA \cite{MCA} (26.62 dB)}
\end{minipage}

\begin{minipage}[b]{0.327\linewidth}
  \centering
\centerline{\epsfig{figure=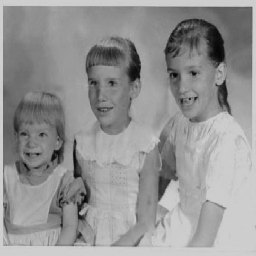,width=4.0cm}}\vspace{-0.3cm}
\centerline{\footnotesize\bf Proposed}
\end{minipage}
\begin{minipage}[b]{0.327\linewidth}
  \centering
\centerline{\epsfig{figure=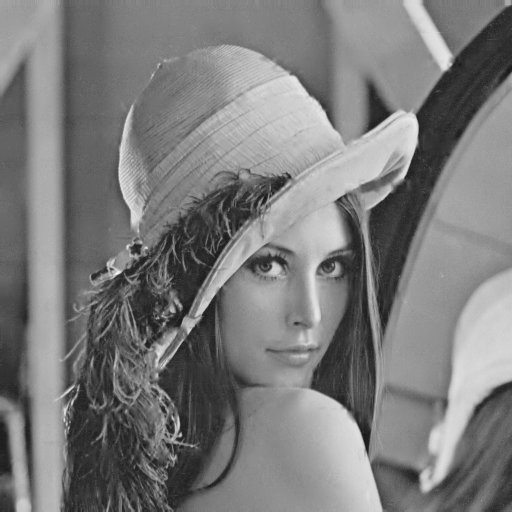,width=4.0cm}}\vspace{-0.3cm}
\centerline{\footnotesize\bf Proposed (34.57 dB)}
\end{minipage}
\begin{minipage}[b]{0.327\linewidth}
  \centering
\centerline{\epsfig{figure=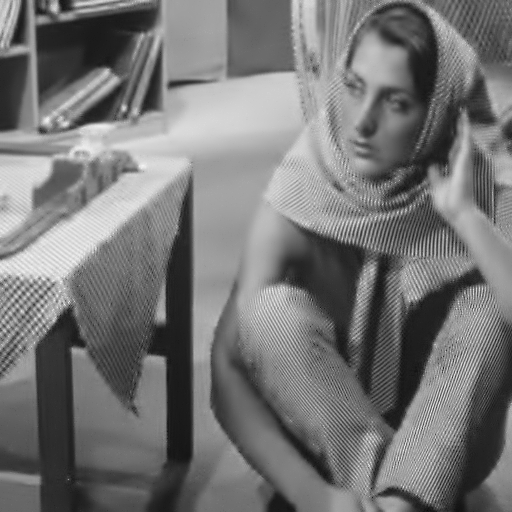,width=4.0cm}}\vspace{-0.3cm}
\centerline{\footnotesize\bf Proposed (27.14 dB)}
\end{minipage}\vspace{-0.5cm}
\caption{Visual comparison of inpainting performance across the methods.}
\label{BSS:Ifig3}
\end{figure}
%
%----------------------------------------------------------------------------------------------------------
\section{Experimental Results}
\subsection{Inpainting}
\label{BSS:inpaint:exp}
To validate the proposed framework of image inpainting, {it is experimented} on Barbara image with pixels missing at random locations, and the image of girls spoiled by mascara.  {The results are compared} with some of the recently proposed inpainting frameworks ``MCA'' (morphological
component analysis) \cite{MCA} and ``EM'' (expectation maximization)  \cite{EM} . {Local blocks centering over each pixel are extracted} for $256\times256$ images, whereas {local blocks centering over each alternating pixel location of the alternating rows are extracted} for $512\times512$ images. {Overcomplete discrete cosine transform (DCT) dictionary is taken} with $K=4n$ number of atoms for sparse representation. {The error tolerance for sparse representation is set} as $\epsilon(n)=3\sqrt{n}$. {A local block size selection is performed} by taking increasing square block sizes $15\times15,17\times17$ and $19\times19$ as described in section \ref{BSS:imp:blocksize}. Block size based clustered images for different masks $B$ are shown in Figure \ref{BSS:Ifig2} (the gray levels are in increasing order of block size).

After the block sizes have been identified for every location, inpainting is performed for every single local block. Global recovery is done by averaging the overlapped regions as per (\ref{BSS:Ieq2}). The inpainting results for both \cite{MCA} and \cite{EM} are obtained using the MCALab toolbox provided in \cite{MCAlab}. A visual comparison between the proposed framework and the algorithms in \cite{MCA} and \cite{EM} is presented in Figure \ref{BSS:Ifig3}, where mascara is removed from Girls image, text is removed from the Lena, and 80\% of the missing pixels are filled in Barbra image. It can be seen that the images inpainted by the proposed framework are subjectively better in comparison to the rest, since it has more details and fewer artifacts. In terms of quantitative comparison, the proposed framework has also achieved a better Peak Signal to Noise Ratio (PSNR), {which is presented in Table \ref{BSS:Itable1} for the cases of random missing pixels.} %In Table \ref{BSS:Itable2}, a quantitative comparison is also made using Structural Similarity Index Metric (SSIM), which gives a score for the structural similarity between the recovered and original image \cite{Wang2004}.} %The MATLAB codes to reproduce the presented results can be downloaded from \url{https://sites.google.com/site/sujitkusahoo/codes}.

\begin{table}
\renewcommand{\arraystretch}{1.01}
\small
\caption{\protect{Image inpainting performance comparison in PSNR}}\label{BSS:Itable1}
%\begin{center}
\centerline{\begin{tabular}{c*{8}{|c}}
\cline{2-8}
& \multicolumn{7}{ c| }{\textbf{Images}}&\\
\hline
\textbf{missing}& Barbra & Lena & Man & Couple & Hill & Boat & Stream& \textbf{Method}\\
\hline\hline
& 32.95 & 34.16 & 29.23  & 31.10  & 31.92  & 31.83  & 25.93 & EM\\
50\% & 31.79 & 32.90 & 29.01 & 30.73 & 31.45 & 31.21 & 26.53 & MCA\\
& \bf34.63 & \bf36.53 & \bf31.09 & \bf32.95 & \bf33.89 & \bf33.27 &\bf 27.29 & Proposed\\
\hline\hline
&17.13 & 29.91 & 24.84 & 26.56 & 27.96 & 26.91 & 22.31 & EM\\
80\% &26.61 & 28.53 & 24.73 & 26.22 & 27.44 & 26.49 & 22.94 & MCA\\
& \bf27.14& \bf29.94 & \bf 25.45 & \bf26.82 & \bf28.47 & \bf26.55 &\bf 23.17 & Proposed\\
\hline
\end{tabular}}
%\end{center}
\end{table}
%
%%
%\begin{table}
%\renewcommand{\arraystretch}{1.01}
%\small
%\caption{Image inpainting performance comparison in SSIM}\label{BSS:Itable2}
%%\begin{center}
%\centerline{\begin{tabular}{c*{8}{|c}}
%\cline{2-8}
%& \multicolumn{7}{ c| }{\textbf{Images}}&\\
%\hline
%\textbf{missing pixel}& Barbara & Lena & Man & Couple & Hill & Boat & Stream& \textbf{Method}\\
%\hline\hline
%& 0.973 & 0.970 & 0.946 & 0.962 & 0.953 & 0.964 & 0.923 & EM\\
%50\% & 0.974 & 0.972 & 0.952 & 0.965 & 0.957 & 0.963 & 0.931 & MCA\\
% & \bf0.984 & \bf 0.984 &\bf 0.968 & \bf0.977 & \bf0.972 & \bf0.977 & \bf0.949 & Proposed\\
%\hline\hline
%& \bf0.915 &\bf0.930 & \bf0.849 & \bf0.893 & \bf0.876 & \bf0.892 & \bf0.794 & EM\\
%80\% & 0.902 & 0.906 & 0.828 & 0.871 & 0.852 & 0.872 & 0.774 & MCA\\
% & 0.912 & 0.928 & 0.846 & 0.882 & 0.875 & 0.877 & 0.787 & Proposed\\
%\hline
%\end{tabular}}
%%\end{center}
%\end{table}
%%

%----------------------------------------------------------------------------------------------------
\subsection{Denoising}
To validate the proposed framework of image denoising, {it is experimented} on some well known gray scale images corrupted with AWGN ($\sigma = 5,15 \;\mathrm{and}\;25$). {The obtained results are compared} with \cite{Elad2006} ($K$-SVD), and one of its close competitor \cite{KLLD} ($K$-LLD). $K$-LLD is a recently proposed denoising framework, which tried to exceed $K$-SVD's denoising performance by clustering the extracted local image blocks, and by performing sparse representation on each cluster through locally learned dictionaries \footnote{The PCA frame derived from the image blocks of each cluster is defined as the locally learned dictionary. Please note that, number of clusters $K$ of \cite{KLLD} is not the same as number of atoms in the dictionary of {the} proposed framework, it is just a coincidence.}. 
\begin{figure}
\centering \vspace{-5.0mm}
\begin{minipage}[b]{0.327\linewidth}
  \centering
\centerline{\epsfig{figure=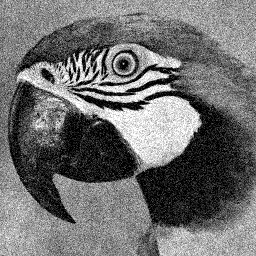,width=4.0cm}}\vspace{-2.8mm}
\centerline{\footnotesize Noisy Parrot}
\end{minipage}
\begin{minipage}[b]{0.327\linewidth}
  \centering
\centerline{\epsfig{figure=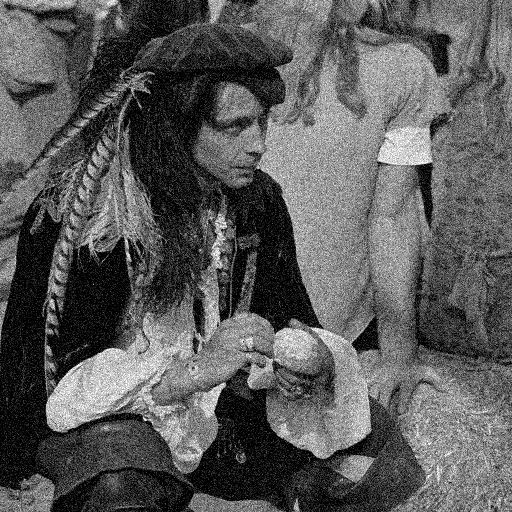,width=4.0cm}}\vspace{-2.8mm}
\centerline{\footnotesize Noisy Man}
\end{minipage}
\begin{minipage}[b]{0.327\linewidth}
  \centering
\centerline{\epsfig{figure=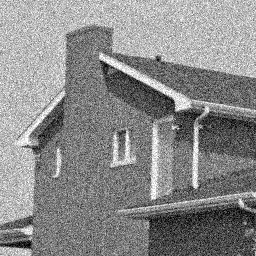,width=4.0cm}}\vspace{-2.8mm}
\centerline{\footnotesize Noisy House}
\end{minipage}

\begin{minipage}[b]{0.327\linewidth}
  \centering
\centerline{\epsfig{figure=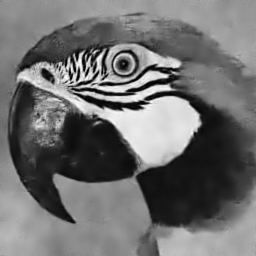,width=4.0cm}}\vspace{-2.8mm}
\centerline{\footnotesize $K$-SVD\cite{Elad2006} (28.43 dB)}
\end{minipage}
\begin{minipage}[b]{0.327\linewidth}
  \centering
\centerline{\epsfig{figure=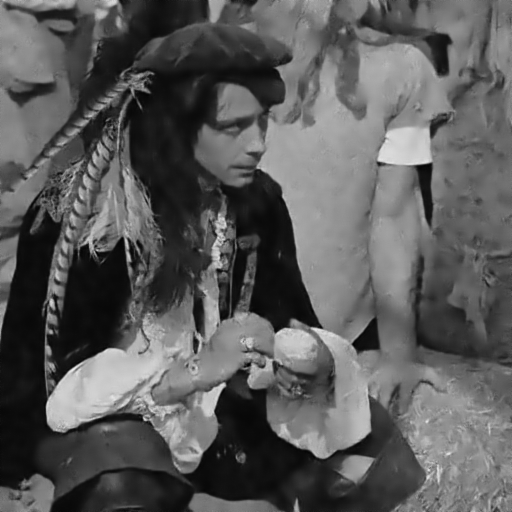,width=4.0cm}}\vspace{-2.8mm}
\centerline{\footnotesize $K$-SVD\cite{Elad2006} (28.11 dB)}
\end{minipage}
\begin{minipage}[b]{0.327\linewidth}
  \centering
\centerline{\epsfig{figure=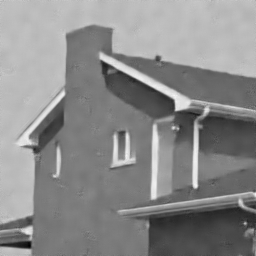,width=4.0cm}}\vspace{-2.8mm}
\centerline{\footnotesize $K$-SVD\cite{Elad2006} (32.10 dB)}
\end{minipage}

\begin{minipage}[b]{0.327\linewidth}
  \centering
\centerline{\epsfig{figure=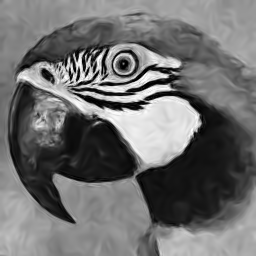,width=4.0cm}}\vspace{-2.8mm}
\centerline{\footnotesize $K$-LLD\cite{KLLD} (27.89 dB)}
\end{minipage}
\begin{minipage}[b]{0.327\linewidth}
  \centering
\centerline{\epsfig{figure=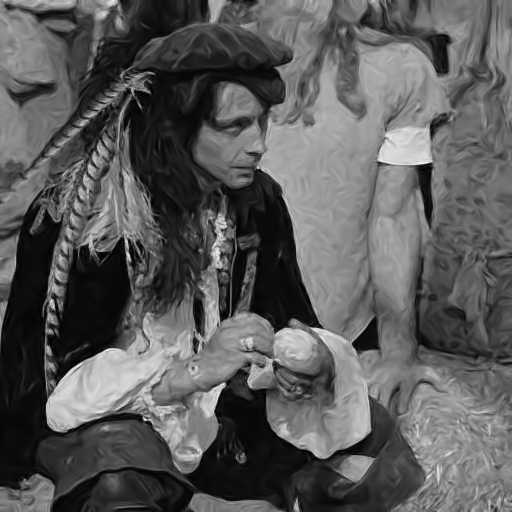,width=4.0cm}}\vspace{-2.8mm}
\centerline{\footnotesize $K$-LLD\cite{KLLD} (28.26 dB)}
\end{minipage}
\begin{minipage}[b]{0.327\linewidth}
  \centering
\centerline{\epsfig{figure=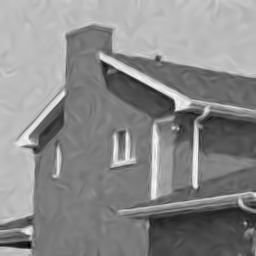,width=4.0cm}}\vspace{-2.8mm}
\centerline{\footnotesize $K$-LLD\cite{KLLD} (30.67 dB)}
\end{minipage}

\begin{minipage}[b]{0.327\linewidth}
  \centering
\centerline{\epsfig{figure=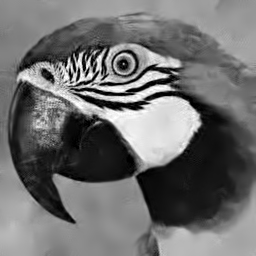,width=4.0cm}}\vspace{-2.8mm}
\centerline{\footnotesize\bf Proposed (28.48 dB)}
\end{minipage}
\begin{minipage}[b]{0.327\linewidth}
  \centering
\centerline{\epsfig{figure=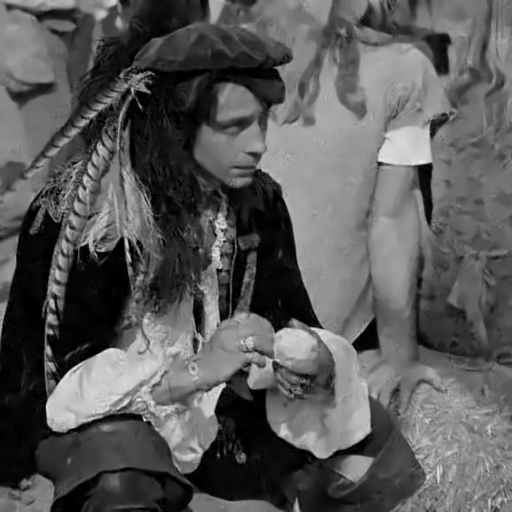,width=4.0cm}}\vspace{-2.8mm}
\centerline{\footnotesize\bf Proposed (28.37 dB)}
\end{minipage}
\begin{minipage}[b]{0.327\linewidth}
  \centering
\centerline{\epsfig{figure=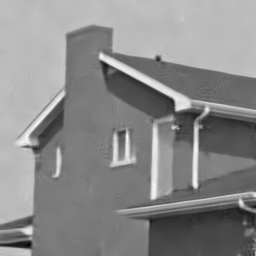,width=4.0cm}}\vspace{-2.8mm}
\centerline{\footnotesize\bf Proposed (32.51 dB)}
\end{minipage}\vspace{-0.5cm}
\caption{Visual comparison of the denoising performances for AWGN $(\sigma=25)$.}
\label{BSS:Dfig3}
\end{figure}

In {the} experimental set up, {local blocks centering over each pixel are extracted} for $256\times256$ images, whereas {local blocks centering over each alternating pixel location of the alternating rows are extracted} for $512\times512$ images. {The number of atoms are kept} as $K=4n$ for each block size $n$. For each block size, to get more than 96\% probability of denoising as per (\ref{BSS:Deq2}), {the value of $\varepsilon =2.68$ is kept} in accordance with Lemma \ref{BSS:lemma1}. {Increasing square blocks of size $11\times11$, $13\times13$ and $15\times15$ are taken}, and selected the local block size as described in section \ref{BSS:den:blocksize}. The selected block size based clustered images are shown in Figure \ref{BSS:Dfig2} (the gray levels are in increasing order of block size). It can be seen clearly that there exists a tradeoff between the noise level and local block size used for sparse representation. When the noise level goes up, a total shift of the clusters from smooth region to texture like region is observed.
\begin{figure}\vspace{-0.25cm}
\centering\vspace{-0.0cm}
\begin{minipage}[b]{0.192\linewidth}
  \centering
\centerline{\epsfig{figure=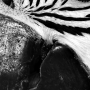,width=2.4cm}}
\vspace{0.8mm}
\end{minipage}
\begin{minipage}[b]{0.192\linewidth}
  \centering
\centerline{\epsfig{figure=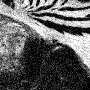,width=2.4cm}}
\vspace{0.8mm}
\end{minipage}
\begin{minipage}[b]{0.192\linewidth}
  \centering
\centerline{\epsfig{figure=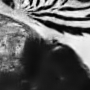,width=2.4cm}}
\vspace{0.8mm}
\end{minipage}
\begin{minipage}[b]{0.192\linewidth}
  \centering
\centerline{\epsfig{figure=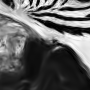,width=2.4cm}}
\vspace{0.8mm}
\end{minipage}
\begin{minipage}[b]{0.192\linewidth}
  \centering
\centerline{\epsfig{figure=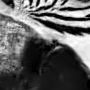,width=2.4cm}}
\vspace{0.8mm}
\end{minipage}
\begin{minipage}[b]{0.192\linewidth}
  \centering
\centerline{\epsfig{figure=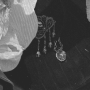,width=2.4cm}}
\vspace{0.8mm}
\end{minipage}
\begin{minipage}[b]{0.192\linewidth}
  \centering
\centerline{\epsfig{figure=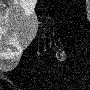,width=2.4cm}}
\vspace{0.8mm}
\end{minipage}
\begin{minipage}[b]{0.192\linewidth}
  \centering
\centerline{\epsfig{figure=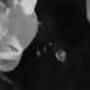,width=2.4cm}}
\vspace{0.8mm}
\end{minipage}
\begin{minipage}[b]{0.192\linewidth}
  \centering
\centerline{\epsfig{figure=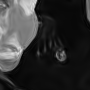,width=2.4cm}}
\vspace{0.8mm}
\end{minipage}
\begin{minipage}[b]{0.192\linewidth}
  \centering
\centerline{\epsfig{figure=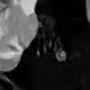,width=2.4cm}}
\vspace{0.8mm}
\end{minipage}
\begin{minipage}[b]{0.192\linewidth}
  \centering
\centerline{\epsfig{figure=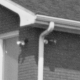,width=2.4cm}}\vspace{-0.18cm}
\centerline{\footnotesize Original}
\end{minipage}
\begin{minipage}[b]{0.192\linewidth}
  \centering
\centerline{\epsfig{figure=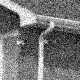,width=2.4cm}}\vspace{-0.18cm}
\centerline{\footnotesize Corrupt}
\end{minipage}
\begin{minipage}[b]{0.192\linewidth}
  \centering
\centerline{\epsfig{figure=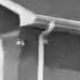,width=2.4cm}}\vspace{-0.18cm}
\centerline{\footnotesize $K$-SVD\cite{Elad2006}}
\end{minipage}
\begin{minipage}[b]{0.192\linewidth}
  \centering
\centerline{\epsfig{figure=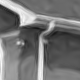,width=2.4cm}}\vspace{-0.18cm}
\centerline{\footnotesize $K$-LLD\cite{KLLD}}
\end{minipage}
\begin{minipage}[b]{0.192\linewidth}
  \centering
\centerline{\epsfig{figure=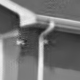,width=2.4cm}}\vspace{-0.18cm}
\centerline{\footnotesize\bf Proposed}
\end{minipage}%\vspace{-0.5cm}
\caption{Visual inspection at irregularities}\vspace{-0.1cm}
\label{BSS:Dfig4}
\end{figure}

For each block size, {the trained dictionaries are obtained} from a corrupt image using SGK \cite{Sahoo2013}, in the similar manner as it's done in \cite{Elad2006}. However, number of SGK iterations used are different for different block sizes. Since \cite{Elad2006} has used 10 $K$-SVD iterations for $8\times8$ blocks, {$\lceil10\frac{n}{64}\rceil$ SGK iterations are used} for $\sqrt{n}\times\sqrt{n}$ blocks. After obtaining the trained dictionaries, {the best block size for each location is decided}. Then, {the image is recovered} by averaging the overlapped regions as per (\ref{BSS:Deq3}), by taking $\lambda = 30/\sigma$. 
\begin{table}[tb]\vspace{-0.2cm}
\renewcommand{\arraystretch}{1.01}
\small
\caption{Image denoising performance comparison in PSNR}\label{BSS:Dtable1}
%\begin{center}
\centerline{\begin{tabular}{c*{8}{|c}}
\cline{2-8}
& \multicolumn{7}{ c| }{\textbf{Images}}&\\
\hline
\textbf{$\sigma$}& CamMan & Parrot & Man & Montage & Peppers & Aerial & House & \textbf{Method}\\
\hline\hline
 &\bf37.90 & \bf37.57 & \bf36.78 & \bf40.17 & \bf37.87 & \bf35.57 & 39.45 & $K$-SVD\\
5 & 36.98 & 36.65 & 36.44 & 39.46 & 37.09 & 35.23 & 37.89 & $K$-LLD\\
& 37.66 & 37.42 & 36.77 & 39.96 & 37.72 & 35.33 & \bf39.51 & Proposed\\
\hline\hline
& \bf31.38 & \bf30.98 & 30.57 & 33.77 & 32.21 & \bf28.64 & 34.32 & $K$-SVD\\
15 & 30.78 & 30.76 & \bf30.76 & 33.14 & 31.96 & 28.55 & 33.89 & $K$-LLD\\
& 31.31 & 30.90 & 30.74 & \bf33.78 & \bf32.25 & 28.49 & \bf34.60 & Proposed\\
\hline\hline
& 28.81 & 28.43 & 28.11 & 30.97 & 29.74 & 25.95 & 32.10 & $K$-SVD\\
25 & 27.96 & 27.89 & 28.26 & 29.52 & 28.94 & 25.78 & 30.67 & $K$-LLD\\
& \bf28.96 & \bf28.48 & \bf28.37 & \bf31.21 & \bf29.91 & \bf25.98 & \bf32.51 & Proposed\\
\hline\hline
& 25.66 & 25.35 & 24.99 & 27.12 & 26.16 & 22.44 & 28.03 & $K$-SVD\\
50 & 20.30 & 20.11 & 20.36 & 20.39 & 20.34 & 19.62 & 20.90 & $K$-LLD\\
& \bf25.92 & \bf25.51 & \bf25.24 & \bf27.35 & \bf26.48 & \bf22.85 & \bf28.66 & Proposed\\
\hline
\end{tabular}}\vspace{-0.1cm}
%\end{center}
\end{table}

A visual comparison between the proposed framework and the algorithms in \cite{Elad2006,KLLD} is presented in Figure \ref{BSS:Dfig3}, where the images are heavily corrupted by AWGN $\sigma=25$. In comparison to the rest, it can be seen that the proposed denoising framework produces subjectively better results, since it has more details and fewer artifacts. Notably, the edges in the house image, the complex objects in the man image, and the joint between the mandibles of the parrot image are well recovered. In Figure \ref{BSS:Dfig4} a visual comparison is made for the denoising performance on these diverse and irregular objects. It can be seen that the proposed framework is better. In $K$-LLD denoised image irregularities are heavily smoothed, and a curly artifact is spreading all over. Frameworks like $K$-LLD has the potential to recover the images better, by taking advantage of self similarity inside the images. However, they have a clear drawback when the image has diversity and irregular discontinuity, which has been taken care by block size selection in the proposed frame work. 

A quantitative comparison by PSNR is also made, and results are shown in Table \ref{BSS:Dtable1}. {It can be seen that the proposed framework produces a better PSNR compare to the frameworks in \cite{KLLD}. In the case of higher noise level ($\sigma\geq25$), the proposed framework performs better in comparison to both \cite{Elad2006} and \cite{KLLD}.}
\section{\protect{Discussions}}
\label{BSS:sec:con}
In this paper, image inpainting and denoising using local sparse representation are illustrated in a framework of location adaptive block size selection. This framework is motivated by the importance of block size selection in inferring the geometrical structures and details in the images. It starts with clustering the image based on the block size selected at every location that minimizes the local MSE. Subsequently it aggregates the individual local estimations to estimate the final image. The experimental results show their potential in comparison to the state of the art image recovery techniques. While this paper addresses recovery of gray scale images, it can also be extended to color images. {The} present work provides stimulating results with an intuitive platform for further investigation.

In the present framework, the block sizes are prefixed. However, the bounds on the local block size is an interesting topic to explore further. In the present framework of aggregation, all the pixels of the recovered blocks are given equal weight. An improvement may be achieved by deriving an aggregation formula with adaptive weights per pixel for the recovered local block.

%-------------------------------
\section*{Acknowledgment}
The author would like to acknowledge Prof. Anamitra Makur for the useful discussions. The author was affiliated to Nanyang Technological University, Singapore during this work, and would like to acknowledge acknowledge their support. 
%-------------------------------
\section*{References}

%------------------------------
\end{document}